\PassOptionsToPackage{table}{xcolor}
\PassOptionsToPackage{dvipsnames}{xcolor}


\documentclass[mlmain]{}



\usepackage[export]{adjustbox}
\usepackage{mathtools}
\usepackage[font=small]{caption}
\captionsetup[sub]{font=small}


   



\usepackage{longtable}
\usepackage{bigdelim}

\usepackage{booktabs}
\usepackage[load-configurations=version-1]{siunitx} 
\usepackage{macros}

\Crefname{definition}{Definition}{Definitions}

\usepackage{amsmath}
\usepackage{nameref}

\usepackage{amssymb}

\newcommand*{\eqnameformat}[1]{%
  \textsf{#1}%
}

\makeatletter
\@ifdefinable{\org@maketag@@@}{%
  \let\org@maketag@@@\maketag@@@
  \renewcommand*{\maketag@@@}[1]{%
    \org@maketag@@@{%
      \@ifundefined{eq@name}{#1}{%
        \begin{tabular}[t]{@{}r@{}}%
          #1\tabularnewline
          \eqnameformat{\@nameuse{eq@name}}%
        \end{tabular}%
      }%
    }%
  }%
}
\newif\ifeqname@star
\newcommand*{\eqname}{%
  \@ifstar{\eqname@startrue\eqname@}{\eqname@starfalse\eqname@}%
}
\newcommand*{\eqname@}[2][]{%
  \gdef\eq@name{#1}%
  \ifx\eq@name\@empty
  \else
    \begingroup
      \@ifundefined{GetTitleString}{%
        \gdef\@currenteqlabelname{#2}%
      }{%
        \GetTitleString{#2}%
        \global\let\@currenteqlabelname\GetTitleStringResult
      }%
      \let\@currentlabelname\@currenteqlabelname
      \label{#1}%
    \endgroup
  \fi
  \gdef\eq@name{#2}%
  \ifx\eq@name\@empty
    \global\let\eq@name\relax
  \else
    \ifeqname@star
      \gdef\eq@name{\llap{#2}}%
    \fi
  \fi
}
\@ifdefinable{\org@make@display@tag}{%
  \let\org@make@display@tag\make@display@tag
  \def\make@display@tag{%
    \@ifundefined{@currenteqlabelname}{}{%
      \let\@currentlabelname\@currenteqlabelname
    }%
    \org@make@display@tag
  }%
}
\let\eq@name\relax
\let\@currenteqlabelname\relax
\g@addto@macro\displ@y@{%
  \global\let\eq@name\relax
  \global\let\@currenteqlabelname\relax
}
\@ifdefinable{\org@math@cr@@}{%
  \let\org@math@cr@@\math@cr@@
  \def\math@cr@@[#1]{%
    \org@math@cr@@[{#1}]%
    \noalign{%
      \global\let\eq@name\relax
    }%
  }%
}
\@ifdefinable{\org@eqref}{%
  \let\org@eqref\eqref
  \renewcommand*{\eqref}[1]{%
    \begingroup
      \let\eq@name\relax
      \org@eqref{#1}%
    \endgroup
  }%
}
\g@addto@macro\equation{%
  \eqname{}%
}
\makeatother


\theorembodyfont{\upshape}
\theoremheaderfont{\scshape}
\theorempostheader{:}
\theoremsep{\newline}

\jmlrvolume{}
\firstpageno{1}
\editors{Sophia Sanborn, Christian Shewmake, Simone Azeglio, Arianna Di Bernardo, Nina Miolane}

\jmlryear{2022}
\jmlrworkshop{Symmetry and Geometry in Neural Representations}



\title[Interval Quasimetric Embeddings]{Improved Representation of Asymmetrical Distances with \\Interval Quasimetric Embeddings}



 


\author{\Name{Tongzhou Wang} \Email{tongzhou@mit.edu}\\
\Name{Phillip Isola} \Email{phillipi@mit.edu}\\
\addr MIT CSAIL}

\begin{document}

\maketitle

\begin{abstract}
Asymmetrical distance structures (\qmets) are ubiquitous in our lives and are gaining more attention in machine learning applications. Imposing such \qmet structures in model representations has been shown to improve many tasks, including reinforcement learning (RL) and causal relation learning. In this work, we present four desirable properties in such \qmet models, and show how prior works fail at them. We propose Interval \Qmet Embedding (IQE), which is designed to satisfy all four criteria. On three \qmet learning experiments, IQEs show strong approximation and generalization abilities, leading to better performance and improved efficiency over prior methods.
\end{abstract}
\begin{keywords}
\Qmets, Asymmetry, Representation Geometry, Representation Learning, Reinforcement Learning\par
\vspace{2.5pt}\noindent\begin{tabular*}{\textwidth}{@{}lr@{}}
\textbf{Project Page:} & \href{https://www.tongzhouwang.info/interval_quasimetric_embedding}{\small\texttt{tongzhouwang.info/interval\_quasimetric\_embedding}}\\[1.5pt]
\textbf{Quasimetric Code Package:}\hspace*{-0.8em} & \href{https://github.com/quasimetric-learning/torch-quasimetric}{\small\texttt{github.com/quasimetric-learning/torch-quasimetric}}
\end{tabular*}\vspace{-2.5pt}
\end{keywords}

\section{Introduction}
\label{sec:intro}

We live in a geometric world. As we move our arms along smooth curves in the 3-dimensional Euclidean space, or find short paths \wrt the Manhattan(-like) distance of a city, we are interacting with one of the essential geometric artifacts---distance. Distances are at the core of almost all decision making.

Although commonly modelled as symmetric quantities, distances and costs are rarely reversible. Wind, gravity, and magnetic forces naturally make one direction harder than the other. Human-designed rules, such as one-way roads, form another source of asymmetry. Fundamental quantities, time and entropy, are inherently irreversible. Decision making in this \emph{asymmetrical} world generally is based on \emph{asymmetrical distances}, called \emph{\Qmets} \citep{wang2022learning,memoli2018quasimetric}.

\Qmets capture the essence of comparing options and optimal planning---the triangle inequality, without requiring symmetry. It is a natural structure for many machine learning problems. In reinforcement learning and control, optimal goal-reaching plan costs in Markov decision processes are exactly \qmets \citep{bertsekas1991analysis,tian2020model,pitis2020inductive}. In casual inference, causal relations can also be formulated as \qmets \citep{balashankar2021learning}. Directed graph embedding and (hierarchical) relation discovery are also special cases of learning \qmets \citep{vendrov2015order,ganea2018hyperbolic,suzuki2019hyperbolic}. In such tasks, many work have demonstrated the benefit of imposing a \qmet structure in model representations \citep{wang2022learning,liu2022metric,balashankar2021learning,venkattaramanujam2019self}.

In this paper, we consider different ways to add such latent \qmet structures to model representations.  \Cref{sec:qmet-structure} discusses four desired properties: (1) satisfying \qmet constraints, (2) universal approximation, (3) low predictor parameter count, and (4) latent positive homogeneity. On a high-level, (1) and (2) ensure correct geometric inductive bias and coverage, while (3) and (4) are related to better optimization and easier usage in downstream models and/or layers. 

To our best knowledge, no prior method satisfies all four requirements. \Cref{sec:iqe} introduces Interval \Qmet Embedding (IQE) as a new \qmet embedding approach that fulfills all criteria. In \Cref{sec:expr}, we empirically verify that IQE significantly improves over previous methods on three \qmet learning tasks from \citet{wang2022learning}.

\section{Latent \Qmet Structures} \label{sec:qmet-structure}

\begin{definition}[Quasimetric]\label{defn:qmet}
Given set $\mathcal{X}$ and function $d \colon \mathcal{X}\times\mathcal{X}\rightarrow [0, \infty]$, $(\mathcal{X}, d)$ is a quasimetric space if $d$ satisfies\begin{itemize}[topsep=-3pt, itemsep=-2pt]
    \item (triangle inequality) $\forall x, y, z$, $d(x, z) \leq d(x, y) + d(y, z)$;
    \item (identity) $\forall x$, $d(x, x) = 0$.
\end{itemize} %
\end{definition}

There are many approaches to impose latent \qmet structures (\definitionref{defn:qmet}) in machine learning models. Since such models essentially capture certain \qmet distances in data, we call them \emph{\qmet models}. 

\begin{table}[t]
\floatconts
    {tab:method-comp}%
    {\caption{%
    Various \qmet modeling methods on the four criteria from \Cref{sec:qmet-structure} \textbf{(last four columns)}. Methods are grouped by their inherent latent (quasi)metric structure \textbf{(first column)}.
    {}\textsuperscript{\textdaggerdbl}PQE reparametrizes its few effective parameters (scale factors) via deep linear networks with many parameters, to overcome optimization difficulties (likely due to diminishing gradients from not being positive homogeneous)
    {}\textsuperscript{\textdagger}Deep Norm and Wide Norm were previously believed to not universally approximate \citep{wang2022learning,liu2022metric}, but we show that they in fact do in \Cref{sec:approx}.
    {}\textsuperscript{\textsection}Deep Norm and Wide Norm by default use learned concave activations to transform its components, which are not positive homogeneous (\Cref{fig:qmet-properties}).
    *MRN as described in the original paper is not a quasimetric but can be easily modified to be one. We compare with both versions in experiments. %
    \vspace{-18pt}%
    }}{%
    \resizebox{
      1\linewidth
    }{!}{%
    \renewcommand\normalsize{\small}%
    \normalsize%
    \centering%
    \newcommand{\bftab}{\fontseries{b}\selectfont}%
    \renewcommand{\arraystretch}{1.2}%
    \begin{tabular}[b]{c l c @{\hspace{14pt}} c @{\hspace{14pt}} c @{\hspace{14pt}} c }
        \toprule
           \multirow{2.2}{*}{\vspace{0pt}Latent Structure}
        &  \multicolumn{1}{c}{\multirow{2}{*}{\vspace{0pt}Method}}
        &  \multirow{2.2}{*}{\vspace{0pt}\shortstack{Quasimetric\\[1pt]Constraints}} 
        &  \multirow{2.2}{*}{\vspace{0pt}\shortstack{Universal\\[1pt]Approx.}}
        &  \multirow{2.2}{*}{\vspace{0pt}\shortstack{Latent \Qmet\\[1pt]Head \#Parameters}}
        &  \multirow{2.2}{*}{\vspace{0pt}\shortstack{Latent Positive\\[1pt]Homogeneity}}
        \\
        &
        &
        &
        &
        &
        \\
        \midrule
        \midrule

        \multirow{4.2}{*}{\shortstack{No Latent\\[1pt]Quasimetric}}
        & \multirow{2.1}{*}{\shortstack[l]{\Uncon \Qmet Predictors\\[-1pt](\eg, \citealp{tian2020model,rizi2018shortest,nair2018visual})}}
        & \cellcolor{red!25}
        & \cellcolor{Green!25}
        & 
        & 
        \\
        &
        & \multirow{-1.8}{*}{\cellcolor{red!25}\xmark}
        & \multirow{-1.8}{*}{\cellcolor{Green!25}\cmark}
        & \multirow{-1.8}{*}{---}
        & \multirow{-1.8}{*}{---}
        \\[-11pt]\\

        & \multirow{2.1}{*}{\shortstack[l]{Dot Product of Asymmetrical Encoders\\[-1pt](\eg, \citealp{schaul2015universal,hong2021bi})}}
        & \cellcolor{red!25}
        & \cellcolor{Green!25}
        & \cellcolor{Green!25}
        & \cellcolor{red!25}
        \\
        &
        & \multirow{-1.8}{*}{\cellcolor{red!25}\xmark}
        & \multirow{-1.8}{*}{\cellcolor{Green!25}\cmark}
        & \multirow{-1.8}{*}{\cellcolor{Green!25}None}
        & \multirow{-1.8}{*}{\cellcolor{red!25}\xmark}
        \\
        \midrule

        Latent Metric
        & Metric Embeddings
        & \cellcolor{Green!25}\cmark
        & \cellcolor{red!25}\xmark
        & \cellcolor{Green!25}Usually None
        & \cellcolor{Green!25}Usually \cmark
        \\
        \midrule

        \multirow{6.4}{*}{\shortstack{Latent\\[2pt]Quasimetric}}
        & \multirow{2.1}{*}{\shortstack[l]{Poisson \Qmet Embedding (PQE)\\[-1pt]\citep{wang2022learning}}}
        & \cellcolor{Green!25}
        & \cellcolor{Green!25}
        & \cellcolor{red!25}
        & \cellcolor{red!25}
        \\
        & 
        & \multirow{-1.8}{*}{\cellcolor{Green!25}\cmark}
        & \multirow{-1.8}{*}{\cellcolor{Green!25}\cmark}
        & \multirow{-1.8}{*}{\cellcolor{red!25}\shortstack{A Lot\\[-1pt](for optimization reasons)\textsuperscript{\textdaggerdbl}}}
        & \multirow{-1.8}{*}{\cellcolor{red!25}\xmark}
        \\[-11pt]\\

        & Deep Norm \citep{pitis2020inductive}
        & \cellcolor{Green!25}\cmark
        & \cellcolor{Green!25}\cmark\textsuperscript{\textdagger}
        & \cellcolor{red!25}A Lot
        & \cellcolor{red!25}\xmark\textsuperscript{\textsection}
        \\[-11pt]\\

        & Wide Norm \citep{pitis2020inductive}
        & \cellcolor{Green!25}\cmark
        & \cellcolor{Green!25}\cmark\textsuperscript{\textdagger}
        & \cellcolor{red!25}A Lot
        & \cellcolor{red!25}\xmark\textsuperscript{\textsection}
        \\[-11pt]\\

        & Metric Residual Network (MRN) \citep{liu2022metric}
        & \cellcolor{red!25}\xmark*
        & \cellcolor{Green!25}\cmark
        & \cellcolor{red!25}A Lot
        & \cellcolor{red!25}\xmark
        \\[-11pt]\\

        & \textbf{Interval Quasimetric Embedding (This paper)}
        & \cellcolor{Green!25}\cmark
        & \cellcolor{Green!25}\cmark
        & \cellcolor{Green!25}None
        & \cellcolor{Green!25}\cmark
        \\
        \bottomrule
    \end{tabular}%
    }}%
    \vspace{-4pt}%
\end{table}

\begin{figure}[t]
\centering
\floatconts
  {fig:qmet-properties}%
  {\caption{Different latent \qmets $d_\mathsf{latent}$. Plots show how predicted distances (and components forming them) change as two latent vectors move apart. Red bars show the number of trainable parameters in $d_\mathsf{latent}$. 
  (a) PQE suffers from diminishing gradients. (b,c) Deep Norm and Wide Norm require expensive latent \qmet head, and have complex relations between latents and predictions (due to its learned concave transformations). (d) IQE uses a simple head and does not suffer from gradient optimization issues. (a-d) Plots are computed at random initializations, with Deep Norm and Wide concave transformation parameters scaled to emphasize the non-linearity. %
  }}%
  {%
    \subfigure[\small Poisson \Qmet\newline Embedding \citep{wang2022learning}]{\label{fig:qmet-properties-pqe}%
      \includegraphics[scale=0.433, trim=0 13 0 6]{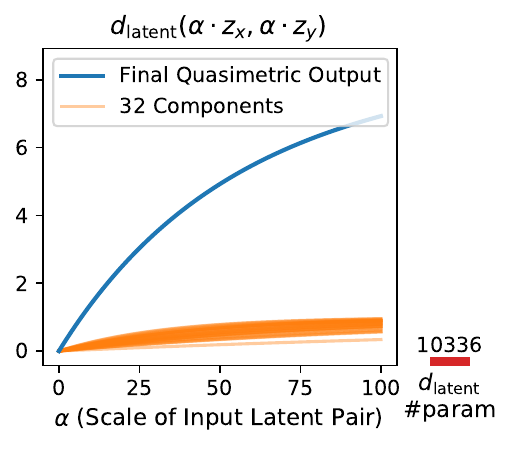}}%
    \hspace{2pt}\hfill%
    \subfigure[\small Deep Norm\newline\citep{pitis2020inductive}]{\label{fig:qmet-properties-dn}%
      \includegraphics[scale=0.433, trim=7 13 7 6]{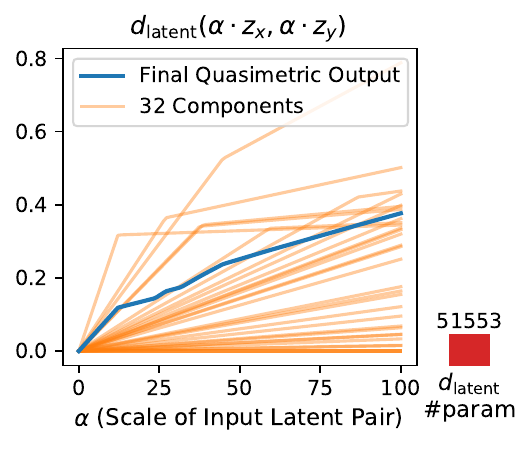}}%
    \hfill%
    \subfigure[\small Wide Norm\newline\citep{pitis2020inductive}]{\label{fig:qmet-properties-wn}%
      \includegraphics[scale=0.433, trim=7 13 7 6]{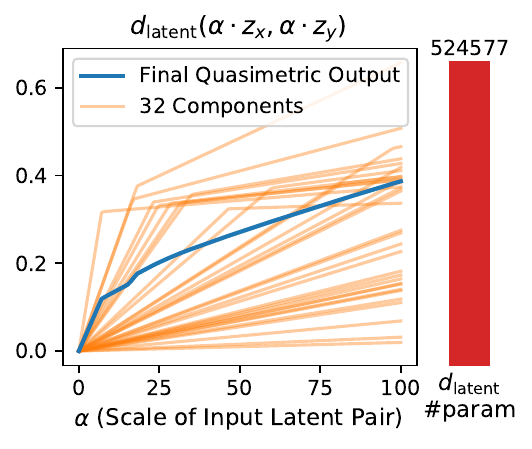}}%
    \hfill%
    \subfigure[\small Interval \Qmet\newline Embedding (Ours)]{\label{fig:qmet-properties-iqe}%
      \includegraphics[scale=0.433, trim=4 13 4 6]{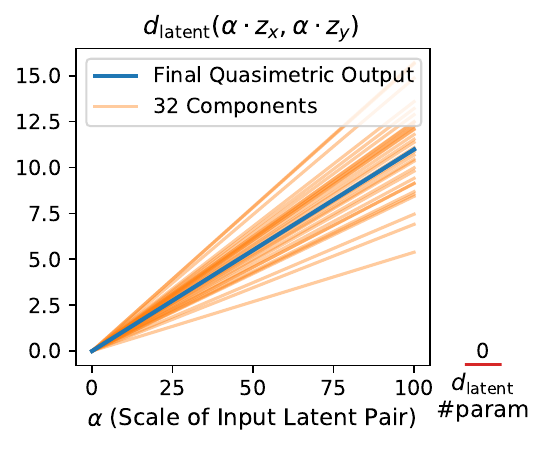}}%
    \vspace{-18pt}%
  }%
  \vspace{1.5pt}%
\end{figure}

One may simply formulate a \qmet model as an unconstrained predictor network that approximates certain \qmets \citep{schaul2015universal,nair2018visual,rizi2018shortest}.  However, the learned model may not fully respect \qmet constraints. In fact, it is proved that they may violate these constraints arbitrarily badly \citep{wang2022learning}. Alternative approaches employ a \emph{latent \qmet } function on a learned latent space \citep{pitis2020inductive,wang2022learning}. They always respect the \qmet properties by construction, but can vary in terms of expressivity, efficiency, and effectiveness for using in downstream models.

We evaluate different methods based on four criteria. First two are basic requirements for a general and proper \qmet structure,  and directly relate to the generalization capabilities (Theorem~4.3 of \citet{wang2022learning}): the formulation must be able to represent \textbf{all \qmets } and (ideally) \textbf{only \qmets}:
\begin{enumerate}[topsep=6pt, itemsep=-1pt]
    \item \textbf{(\Qmet Constraints)} \Qmet model should (at least approximately) satisfy all \qmet properties, enforcing the correct geometry and inductive biases.
    \item \textbf{(Universal Approximation)} \Qmet model should be able to generally approximate any \qmet structure of data (with jointly trained encoder models).
\end{enumerate}

To our best knowledge,  only certain \emph{latent \qmet } formulations satisfy both above properties. These methods jointly learn an \emph{encoder} $f(x; \theta)$ mapping data $x \in \mathcal{X}$ to a latent vector $z_x \in \mathcal{Z}$, as well as a (sometimes parametrized) \emph{latent \qmet head} $d_\mathsf{latent}(z_x, z_y; \psi)$ estimating the distance from data $x$ to data $y$ (via their latents). Both components collectively define a parametrized \qmet on data space $\mathcal{X}$: \begin{equation}
    \eqname{\textsf{General Latent \Qmets}}
    \hspace{105pt}d(x, y; \theta, \psi) \trieq d_\mathsf{latent}(f(x; \theta), f(y; \theta); \psi),\mathrlap{\qquad\qquad x \in \mathcal{X}, y \in \mathcal{X}.}
\end{equation}
Not all such formulation are equally good at optimization and obtaining a high-quality latent space for transferring to downstream models and/or tasks. We argue that $d_\mathsf{latent}$ should be a simple mapping that satisfies two criteria (visualized in \Cref{fig:qmet-properties}):
\begin{enumerate}[resume, topsep=6pt, itemsep=-1pt]
    \item \textbf{(Latent \Qmet Head $d_\mathsf{latent}$ with Few Parameters)} A mapping with many trainable parameters complicates the relation between input latent and \qmet distances. This not only hurts training efficiency but also makes it harder for subsequent models to take advantage of the \qmet inductive bias in latent vectors. Instead, most parameters and processing should ideally be in the encoder $f$.
    
    \item \textbf{(Latent Positive Homogeneity: $d_\mathsf{latent}(\alpha z_x, \alpha z_y) = \alpha\cdot d_\mathsf{latent}(z_x, z_y)$, $\forall \alpha > 0, z_x, z_y$)} This is similar to requiring the gradient gradient of $d_\mathsf{latent}$ (along certain directions) to have small Lipschitz constant, a property known to improve speed and stability of gradient optimization \citep{sashank2018convergence,li2019convergence}. As shown in \Cref{fig:qmet-properties-pqe}, this is not even approximately true in the prior work---Poisson \Qmet Embedding (PQE), leading to to diminishing gradient and a limited output range. Indeed, PQE requires complex reparametrization tricks to aid optimization \citep{wang2022learning}.
    This property linearly relates \qmet distances with latent magnitudes (with directions fixed), and potentially allows downstream deep models (that are good at processing Euclidean inputs) to better utilize the \qmet structure.
\end{enumerate}

\Cref{tab:method-comp} summarizes prior works and our proposal on all four properties.  Next section describes our proposed Interval Quasimetric Embedding (IQE), the only method that satisfies all four requirements.

\section{Interval Quasimetric Embeddings (IQE)}\label{sec:iqe}

The main issue with PQE is that its components are bounded in $[0, 1)$ and suffer from diminishing gradients (\Cref{fig:qmet-properties-pqe}). Special reparametrization tricks are necessary for successful optimization \citep{wang2022learning}. We propose Interval \Qmet Embeddings (IQE) to directly address this drawback. \Cref{apd:iqe-pqe} derives IQE via a modified PQE framework. 

IQE is a new encoder-based \qmet model, where a (learned) encoder maps data into some latent space, where our latent IQE \qmet $d_\mathsf{IQE}$ outputs a \qmet distance between two given latents.

\paragraph{IQE Components.}

Similar to PQE, IQE considers input latents as two-dimensional matrices (via reshaping). For input latents $u, v \in \R^{k\times l}$, IQE is formed by components that capture the total size (\ie, Lebesgue measure) of unions of several intervals on the real line:

\begin{equation}
    \eqname{\textsf{IQE Components}}
    \hspace{60pt}\mathllap{\forall i = 1, 2, \dots, k},\qquad d_i(u, v) \trieq \underbrace{\size{\bigcup_{j=1}^l \underbrace{\big[u_{ij}, \max(u_{ij}, v_{ij})\big]}_{\text{interval on the real line}}}}_{\mathclap{\text{size of the set formed from union of $l$ intervals}}}. 
\end{equation}
\Cref{fig:iqe-computation} provides a graphical illustration on how to compute these components.

\begin{figure}[t]
\centering
\floatconts
  {fig:iqe-computation}
  {\caption{Computing IQE \qmet from latent ${\color[RGB]{17,142,255}u} \in \R^{2\times 3}$ to latent ${\color[RGB]{242,114,0} v} \in \R^{2\times 3}$.}}
  {%
    \includegraphics[scale=0.2525, trim=96 535 60 100, clip]{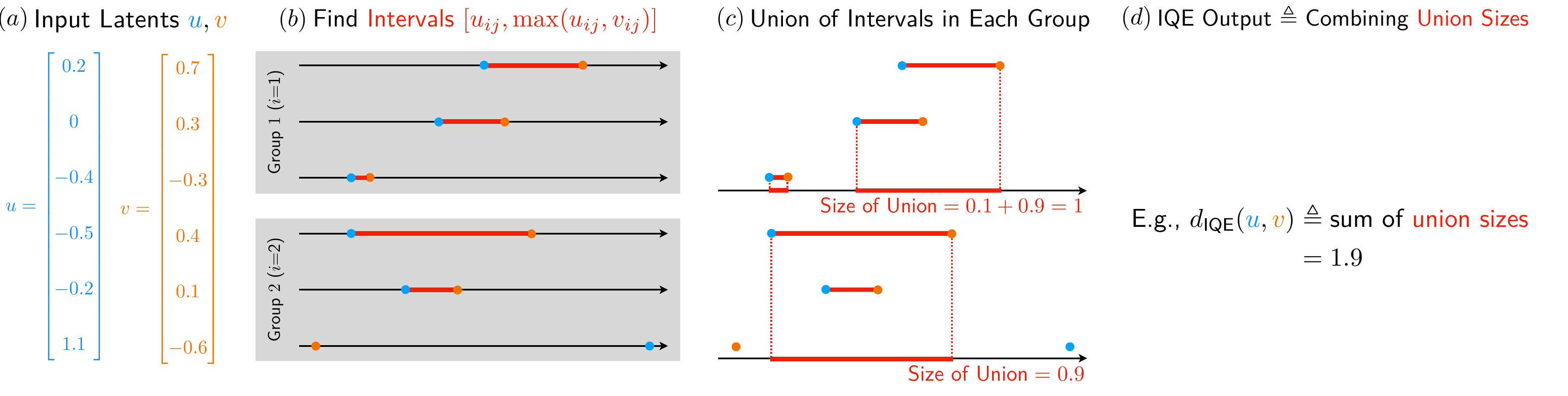}%
    \vspace{-20pt}%
  }
\end{figure}

\paragraph{Combining IQE Components.}
Unlike PQE, IQE components are positive homogeneous and can be arbitrarily scaled (\Cref{fig:qmet-properties-iqe}), and thus do not require special reparametrization in combining them. Simply summing yields the most basic yet effective IQE formulation, \text{IQE-sum}:
\begin{equation}
    \eqname{\textsf{IQE-sum}}
    d_\mathsf{IQE\hbox{-}sum}(u, v) \trieq \sum_{i=1}^k d_i(u, v) \label{eq:iqe-sum}
\end{equation}

Using the $\mathrm{maxmean}$ reduction from prior work \citep{pitis2020inductive}, we obtain \text{IQE-maxmean} with a single extra parameter $\alpha \in [0, 1]$ (parametrized via a \texttt{sigmoid} transform):
\begin{align}
    \eqname{\smash{\textsf{IQE-maxmean}}}
    d_\mathsf{IQE\hbox{-}maxmean}(u, v; \alpha) & \trieq \mathrm{maxmean}(d_1(u, v), \dots, d_k(u, v); \alpha)\label{eq:iqe-maxmean}\\
    & \trieq \alpha \cdot \max(d_1(u, v), \dots, d_k(u, v)) \notag\\
    & \hphantom{{}={}} + (1-\alpha) \cdot \mathrm{mean}(d_1(u, v), \dots, d_k(u, v))\notag
\end{align}

Prior methods often require expensive predictor heads (\eg, MRN, Deep Norm and Wide Norm) and/or complex initialization and reparametrization (\eg, PQEs). In contrast, both IQE formulations have very simple forms. In \Cref{sec:expr}, we will see that IQEs are not only simple, but also practically effective.

\subsection{Theoretical Results on Universal Approximation}\label{sec:approx}

Following prior works \citep{wang2022learning,liu2022metric,pitis2020inductive}, we assume that the target \qmet $(\mathcal{X}, d)$ has only finite distances. Here we present strong universal approximation gurantees for IQEs. All full proofs are in \Cref{apd:proofs}.

\begin{theorem}[IQE Universal Approximation; Finite Case]\label{thm:iqe-maxmean-fin}
For any finite \qmet space $(\mathcal{X}, d)$ with $\size{\mathcal{X}} = n < \infty$, there exists encoders $f_1, f_2$ such that $(f_1, d_\mathsf{IQE\hbox{-}maxmean})$ exactly represents $d$, and $(f_2, d_\mathsf{IQE\hbox{-}sum})$ approximates $d$ with distortion $\mathcal{O}(t \log^2 n)$, where $t$ is a complexity measure of $(\mathcal{X}, d)$ (called treewidth).
\end{theorem}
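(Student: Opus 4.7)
The plan is to prove the two halves separately: first the exact representation via $d_\mathsf{IQE\hbox{-}maxmean}$ by an asymmetric Fréchet-style construction, then the distortion bound for $d_\mathsf{IQE\hbox{-}sum}$ by reduction to a classical bounded-treewidth embedding.

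\textbf{Exact case.} I would specialize IQE to $l = 1$ and $k = n$, in which case each component simplifies to $d_i(u, v) = (v_{i1} - u_{i1})^+$. Enumerate $\mathcal{X} = \{z_1, \dots, z_n\}$ and define the asymmetric Fréchet encoder $f_1 \colon \mathcal{X} \to \R^{n \times 1}$ by $(f_1(x))_{i,1} := d(z_i, x)$. Then $d_i(f_1(x), f_1(y)) = (d(z_i, y) - d(z_i, x))^+$, which is bounded above by $d(x, y)$ via the triangle inequality $d(z_i, y) \leq d(z_i, x) + d(x, y)$, with equality attained at the index $i$ for which $z_i = x$. Choosing $\alpha = 1 \in [0, 1]$ in \eqref{eq:iqe-maxmean} turns IQE-maxmean into pure $\max$, so $d_\mathsf{IQE\hbox{-}maxmean}(f_1(x), f_1(y); 1) = \max_i d_i(f_1(x), f_1(y)) = d(x, y)$.

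\textbf{Distortion case.} The plan is to reduce to a bounded-treewidth embedding into oriented $\ell_1$. Again with $l = 1$, $d_\mathsf{IQE\hbox{-}sum}(u, v) = \sum_{i=1}^k (v_i - u_i)^+$ is the canonical oriented $\ell_1$ quasimetric on $\R^k$, so it suffices to bi-Lipschitz embed $(\mathcal{X}, d)$ into this oriented $\ell_1$ with distortion $O(t \log^2 n)$ for some $k$. I would (i) recursively decompose the weighted graph underlying $(\mathcal{X}, d)$ using balanced separators of size $O(t)$, yielding a recursion of depth $O(\log n)$; (ii) at each level, apply the asymmetric Fréchet encoder at each separator vertex, giving $O(t)$-dimensional coordinate blocks that isometrically capture any shortest path crossing that separator; and (iii) handle the remaining intra-piece distances via an asymmetric Bourgain-style random-partitioning argument, paying an additional $O(\log n)$ factor. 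Concatenating the per-level blocks yields $f_2$, and the distortions combine multiplicatively as $O(t) \cdot O(\log n) \cdot O(\log n) = O(t \log^2 n)$.

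\textbf{Main obstacle.} The maxmean part is essentially the short asymmetric Fréchet trick. The substantive difficulty lies in the sum case: the bounded-treewidth-to-$\ell_1$ machinery (balanced separators, Bourgain-style random partitions, padding lemmas) is classically developed for symmetric metrics, and porting it to quasimetrics requires one-sided analogues that simultaneously control non-expansion and non-contraction with respect to $\ell_1^+$, despite having only a one-directional triangle inequality. Once those asymmetric versions are in place, the combinatorial counts of separator size and recursion depth combine to the stated $O(t \log^2 n)$ rate.
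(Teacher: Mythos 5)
Your \text{IQE-maxmean} argument is correct and is essentially the paper's: with $l=1$ and $\alpha=1$ the head reduces to the MRN asymmetric component $\max_i (v_i-u_i)^+$, and a one-sided Fr\'echet/landmark encoder gives exact representation by the triangle inequality (whether you take $d(z_i,\cdot)$ or $-d(\cdot,z_i)$ as coordinates is immaterial; the paper routes the same construction through Theorem~2 of \citet{liu2022metric}).

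The \text{IQE-sum} half has a genuine gap, located exactly where the paper's contribution lies. By fixing $l=1$ you collapse \text{IQE-sum} to the oriented $\ell_1$ quasimetric $\sum_i (v_i-u_i)^+$, whose cone is generated by \emph{directed-cut} quasipartitions of the form $\indic{x\in S,\ y\notin S}$ (a single threshold in a single coordinate). But the available $\mathcal{O}(t\log^2 n)$ machinery for bounded-treewidth quasimetrics (Lemma~C.5 of \citet{wang2022learning}, building on \citealp{memoli2018quasimetric}) produces a convex combination of \emph{general} quasipartitions, each induced by an order embedding $g\colon\mathcal{X}\to[n]^m$ via $\pi(x,y)=\indic{\exists j,\ g_j(x)>g_j(y)}$. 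Such a $\pi$ is a disjunction over $m$ coordinates and is not expressible as a sum of single-coordinate terms, so it does not lie in oriented $\ell_1$; it is not established that bounded-treewidth quasimetrics embed into oriented $\ell_1$ itself with the stated distortion. The paper's actual novel step is that a single IQE component with $l=mn>1$---the Lebesgue measure of a union of intervals built from the indicator-style vectors $e_{g_j(x)}$---evaluates to $0$ exactly when $g(x)\le g(y)$ coordinatewise and to $1$ otherwise, i.e., represents an arbitrary scaled quasipartition exactly; \text{IQE-sum} then realizes the convex combination and inherits the bound by citation. Your choice $l=1$ discards precisely this conjunctive structure. Relatedly, your fallback plan of re-deriving an asymmetric separator/Bourgain decomposition that lands directly in oriented $\ell_1$ is not a routine port of the symmetric theory: the asymmetric analogues of padded decompositions are exactly what the cited works develop, and they terminate at order-embedding quasipartitions rather than directed cuts, so you would be proving a new and stronger embedding theorem rather than filling in details. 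The fix is to keep $l>1$, reduce a general quasipartition to one IQE component as above, and invoke Lemma~C.5.
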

\begin{proof}[Sketch]
\text{IQE-maxmean} can exactly represent function $d_\mathsf{asym}(u, v) = \max_i (v_i - u_i)^+$. Rewriting $d(x, y) = \max_{z \in \mathcal{X}} (d(x, z) - d(y, z))^+$ leads a desired encoder $f_1$.\par
For \text{IQE-sum}, each IQE component can exactly represent any \qmet that takes in binary values (called \qparts) with arbitrary scaling. The desired distortion can be achieved with a convex combination of \qparts (Lemma~C.5 of \citet{wang2022learning}), and thus also with \text{IQE-sum}.
\end{proof} 

\begin{theorem}[IQE Universal Approximation; General Case]\label{thm:iqe-maxmean-inf}
Consider any \qmet space $(\mathcal{X}, d)$ where $\mathcal{X}$ is compact and $d$ is continuous. $\forall \epsilon > 0$, with sufficiently large $m$, there exists some continuous encoder $f \colon \mathcal{X} \rightarrow \R^m$ such that \begin{equation}
    \qquad\qquad \forall x \in \mathcal{X}, y \in \mathcal{X},\quad \abs{d_\mathsf{IQE\hbox{-}maxmean}(f(x), f(y)) - d(x, y)} \leq \eps.
\end{equation}
\end{theorem}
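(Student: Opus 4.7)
The plan is to lift the finite-case construction of \Cref{thm:iqe-maxmean-fin} to the continuous setting by replacing the full index set $\mathcal{X}$ in the identity $d(x,y) = \max_{z \in \mathcal{X}} (d(x,z) - d(y,z))^+$ with a finite but sufficiently dense collection of anchor points. Concretely, I would take the IQE width $l = 1$, so that each component reduces to $d_i(u,v) = (v_{i,1} - u_{i,1})^+$, and set the mixing weight $\alpha = 1$, so that \text{IQE-maxmean} recovers $\max_i (v_{i,1} - u_{i,1})^+$ exactly.

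First I would choose the anchor points. By continuity of $d$ together with $d(y,y) = 0$, for each $y \in \mathcal{X}$ the set
\begin{equation*}
    U_y \triangleq \{z \in \mathcal{X} : d(y,z) < \epsilon/2 \text{ and } d(z,y) < \epsilon/2\}
\end{equation*}
is open and contains $y$, so $\{U_y\}_{y \in \mathcal{X}}$ is an open cover of $\mathcal{X}$. Compactness yields a finite subcover $U_{z_1}, \dots, U_{z_m}$, and I would take the encoder $f \colon \mathcal{X} \to \R^m$ (reshaped to $\R^{m \times 1}$) given by $f(x)_{i,1} \triangleq - d(x, z_i)$, which is continuous since $d$ is.

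Under these choices,
\begin{equation*}
    d_\mathsf{IQE\hbox{-}maxmean}(f(x), f(y)) = \max_{i=1,\dots,m} \bigl(d(x, z_i) - d(y, z_i)\bigr)^+.
\end{equation*}
The triangle inequality $d(x, z_i) \leq d(x,y) + d(y, z_i)$ immediately gives the upper bound $d_\mathsf{IQE\hbox{-}maxmean}(f(x), f(y)) \leq d(x,y)$. For the matching lower bound I would select any $i$ with $y \in U_{z_i}$: then $d(y, z_i) < \epsilon/2$ and $d(z_i, y) < \epsilon/2$, whence $d(x, z_i) \geq d(x,y) - d(z_i, y) > d(x,y) - \epsilon/2$ and consequently $d(x, z_i) - d(y, z_i) > d(x, y) - \epsilon$. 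Combining the two bounds yields the uniform estimate in the theorem.

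I expect the only real subtlety to be bookkeeping around the choice $\alpha = 1$. If the sigmoid parametrization is read as forcing $\alpha \in (0,1)$ strictly, then, since $d$ is bounded on the compact space $\mathcal{X}^2$ by some $M < \infty$, the mean term in \text{IQE-maxmean} contributes at most $(1-\alpha) M$, which can be absorbed into the slack by taking $\alpha$ close enough to $1$ and halving $\epsilon$ in the covering step. Beyond this, the argument is essentially the compactness-based continuous analogue of the finite identity already exploited in \Cref{thm:iqe-maxmean-fin}, so no further obstacle is anticipated.
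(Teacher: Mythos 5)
Your proof is correct, and it begins with exactly the same reduction the paper uses: taking $l=1$ collapses each IQE component to $(v_{i,1}-u_{i,1})^{+}$, so with $\alpha$ at (or near) $1$ the head becomes the MRN-style asymmetric map $\max_i(v_i-u_i)^{+}$. Where you diverge is in what happens next: the paper stops at this reduction and simply invokes Theorem~2 of \citet{liu2022metric} for the compact case, whereas you prove that external lemma inline via the covering argument --- the quasi-Kuratowski encoder $f(x)_i=-d(x,z_i)$ over a finite $\epsilon/2$-net of two-sided neighborhoods $U_{z_i}$, with the triangle inequality giving $\max_i(d(x,z_i)-d(y,z_i))^{+}\le d(x,y)$ and the net giving the matching lower bound $d(x,y)-\epsilon$. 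All the steps check out (openness of $U_y$ from continuity of $d$, continuity of $f$, boundedness of $d$ on the compact $\mathcal{X}\times\mathcal{X}$ so the mean term is absorbed when $\alpha<1$ is forced by the sigmoid parametrization), and your handling of the $\alpha$ endpoint is a detail the paper glosses over entirely. The trade-off is the usual one: the paper's citation-based proof is shorter and makes the structural point that IQE-maxmean inherits everything MRN's asymmetric head can do, while your version is self-contained, makes the anchor-point construction explicit, and does not require the reader to unpack what Liu et al.'s Theorem~2 actually establishes.
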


\paragraph{Relation with PQE.}  \text{IQE-maxmean} guarantees are strictly stronger than those of PQEs (and IQE-sum), which is only a distortion bound on the finite case using polynomial-sized encoders. With the same encoder, \text{IQE-maxmean} exactly represents any finite \qmet. 
\paragraph{Relation with MRN.} Our \text{IQE-maxmean} analysis is largely inspired by the MRN results. In \Cref{apd:proofs},  full proofs reduce the MRN asymmetrical component to an \text{IQE-maxmean}.
\paragraph{Deep Norm and Wide Norm.} Also using a connection to MRN, we are the first to prove that Deep Norm and Wide Norm universally approximate any \emph{\qmet}.
\begin{theorem}[Deep Norm and Wide Norm Universal Approximation]\label{thm:dnwn-ua}
Deep Norm and Wide Norm enjoy the same approximation gaurantees as stated for \text{IQE-maxmean} in \Cref{thm:iqe-maxmean-fin,thm:iqe-maxmean-inf}.
\end{theorem}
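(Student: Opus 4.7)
}
My strategy is to mirror the proof sketch of \Cref{thm:iqe-maxmean-fin,thm:iqe-maxmean-inf} by reducing Deep Norm and Wide Norm to a form that subsumes the MRN asymmetrical component $d_\mathsf{asym}(u, v) = \max_i (u_i - v_i)^+$. The appendix already argues that \text{IQE-maxmean} can realize exactly this function, and that the encoder $f_1(x) = (d(x, z))_{z \in \mathcal{X}}$ combined with $d_\mathsf{asym}$ recovers $d(x, y) = \max_{z} (d(x,z) - d(y,z))^+$. So if I can show Deep Norm and Wide Norm each contain (or can uniformly approximate) $d_\mathsf{asym}$ under their respective parametrizations, the same encoder construction yields the finite-case universal approximation, and compactness/uniform-continuity gives the continuous case.

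First I would inspect the Deep Norm architecture of \citet{pitis2020inductive}: an affine pre-activation $W(u - v)$ followed by a stack of non-negative-weight linear layers interleaved with concave, monotonically non-decreasing activations, aggregated by a final non-negative linear read-out. I would pick weights so that the first layer outputs the vector $(u_i - v_i)_i$, apply (a scaled version of) a concave monotone activation chosen to behave as a ReLU on the bounded range that arises for latents in a compact set, and then use the non-negative linear structure to compute a maximum. Standard constructions (approximating $\max$ as a sum of rectifiers, e.g.\ $\max(a, b) = a + (b - a)^+$ for two inputs, iterated) suffice. If the concave activations are parametrized and learnable (as in the default Deep Norm), one can choose parameters so that the activation is arbitrarily close to the identity (or to ReLU) on a chosen compact interval; this gives a uniform $\eps$-approximation of $d_\mathsf{asym}$ on any bounded set of latents, which is all that is needed because the target $(\mathcal{X}, d)$ has only finite distances. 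For Wide Norm I would repeat the argument with its parallel-branch structure, where each branch can independently produce one ReLU summand $(u_i - v_i)^+$ and the aggregator implements the max.

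Second I would plug this into the outer construction. For the finite case (\Cref{thm:iqe-maxmean-fin}), let $n = \size{\mathcal{X}}$ and define $f(x) = (d(x, z))_{z \in \mathcal{X}} \in \R^n$. Choose the Deep Norm (resp.\ Wide Norm) parameters so that $d_\mathsf{latent}(f(x), f(y)) = \max_z (d(x,z) - d(y,z))^+ = d(x, y)$ exactly (or up to an arbitrarily small uniform error, which collapses to exact equality because there are only finitely many pairs and we can pass to a limit of activation parameters). This is the Deep Norm / Wide Norm analogue of the $(f_1, d_\mathsf{IQE\text{-}maxmean})$ construction. For the continuous case (\Cref{thm:iqe-maxmean-inf}), since $\mathcal{X}$ is compact and $d$ is continuous, I would fix $\eps > 0$, pick a finite $\eps/3$-net $\mathcal{X}_\eps \subset \mathcal{X}$, apply the finite construction to $(\mathcal{X}_\eps, d|_{\mathcal{X}_\eps})$, and extend the encoder continuously via a partition of unity on $\mathcal{X}$; uniform continuity of $d$ controls the error introduced by this extension on all of $\mathcal{X}$.

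The main obstacle will be Step~1: verifying that the architectural constraints of Deep Norm (non-negative weights combined with concave, not necessarily piecewise-linear activations) and of Wide Norm can realize the max-of-ReLU-differences function tightly enough. Concave activations are not ReLUs, so exact representation inside any single network is unlikely; the cleanest route is to prove a ``Deep/Wide Norm approximates $d_\mathsf{asym}$ uniformly on compacta'' lemma by explicitly constructing parameters that push the concave activations into an essentially linear regime on the relevant interval, then showing the resulting compositional error is absorbed into the outer $\eps$. Once this lemma is in place, the rest of the argument is a direct transcription of the \text{IQE-maxmean} proof, which is why the theorem statement can simply inherit the guarantees of \Cref{thm:iqe-maxmean-fin,thm:iqe-maxmean-inf}.
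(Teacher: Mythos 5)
Your proposal follows essentially the same route as the paper: the paper's proof simply observes that the MRN guarantee is established by showing an asymmetric semi-norm applied to encoder differences universally approximates quasimetrics (Theorem~2 of \citet{liu2022metric}), and that Deep Norm and Wide Norm can approximate any semi-norm (Theorem~2 of \citet{pitis2020inductive}), so the guarantees transfer. Your explicit construction of $\max_i (u_i - v_i)^+$ inside the two architectures and your $\epsilon$-net extension to the compact case are correct but are already subsumed by those two cited theorems, so the paper dispenses with both in two lines.
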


\section{Related Works}\label{sec:related}

\paragraph{\Qmet} captures a common geometric structure. Perhaps the most important example is the cost of navigating between any two states, which is extensively studied in control theory and reinforcement learning (RL) \citep{bertsekas1991analysis}. Such costs directly correspond to Q-functions and value functions \citep{schaul2015universal}, making \qmet models an increasingly popular choice for them \citep{pitis2020inductive,tian2020model,wang2022learning,liu2022metric}. In relation learning, causality \citep{balashankar2021learning} and hierarchy \citep{vendrov2015order,ganea2018hyperbolic} discovery can be modelled as learning (special cases of) \qmets, and benefit from \qmet models.

\paragraph{Latent \Qmets and Representation Learning. } Latent \qmet is the most common approach to model \qmets. A (usually jointly learned) encoder maps data into a latent space, where some (maybe parametrized) latent \qmet function gives the distance prediction output  \citep{pitis2020inductive,wang2022learning,liu2022metric}. In theoretical computer science, manually constructed \qmet embeddings are used in an improved sparse-cut algorithm \citep{memoli2018quasimetric}. In machine learning, latent \qmets can be used to obtain representation spaces that are directly informative of \qmet structures \citep{balashankar2021learning,vendrov2015order}. Indeed, representation learning methods are often designed to capture certain geometric properties, including similarity \citep{wang2020hypersphere}, and equivalences \citep{zhang2020learning,wang2022denoisedmdps}, and independence/disentanglement \citep{burgess2018understanding}.

\section{Experiments}\label{sec:expr}

IQE satisfies all four requirements from \Cref{sec:qmet-structure}. But does it perform better empirically?  

We use all three tasks from \citet{wang2022learning} to evaluate (1) the ability to approximate \qmets and generalize to test pairs (2) benefits from enforcing a \qmet structure in deep learning models. For \qmet modeling tasks, accurate prediction on test pairs (\ie, generalization) is directly related to good approximation of training distances and respecting \qmet constraints \citep{wang2022learning}. 

For all tasks, we  compare the following eight families of \qmet models: \begin{itemize}[topsep=4pt, itemsep=-1.5pt]
    \item \textbf{IQEs (Ours):} IQE-sum and IQE-maxmean.
    \item \textbf{PQEs \citep{wang2022learning}:} PQE-LH and PQE-GG.
    \item \textbf{\Uncon Neural Networks:} Unconstrained networks that map concatenated input pair to a raw distance prediction (directly, with $\exp$ transform, and with $\smash{(\cdot)^2}$ transform) or $\gamma$-discounted distance  (directly, and with a sigmoid-transform). 
    \item \textbf{Asymmetrical Dot Products:} On input pair $(x, y)$, encoding each into a feature vector with a \emph{different} network and then taking the dot product. Identical to \uncon networks, the output is used in the same $5$ ways. 
    \item \textbf{Metric Encoders:} Embedding into Euclidean space, $\ell_1$ space, hypersphere with (scaled) spherical distance, or a mixture of all three with learned weights.
    \item \textbf{Deep Norm \citep{pitis2020inductive}:} The original formulation may produce negative values. We use both the original version and a fixed version (see \Cref{sec:dn-issue}).
    \item \textbf{Wide Norm \citep{pitis2020inductive}.}
    \item \textbf{MRN \citep{liu2022metric}:} The original formulation may violate \qmet constraints. We use both the original version and a fixed version (see \Cref{sec:mrn-issue}).
\end{itemize}

\begin{table}[t]
\floatconts
    {tab:berkstan}%
    {\caption{Modeling the large-scale $\mathsf{Berkeley\hbox{-}Stanford~Web~Graph}$ with different \qmet models. For some \textbf{baseline} families, we show the \ul{best} method picked \wrt~validation set MSE.%
    \vspace{-20pt}%
    }}{%
    \centering
    \resizebox{
      1\linewidth
    }{!}{%
    \renewcommand\normalsize{\small}%
    \normalsize
    \centering
    \setlength{\tabcolsep}{5pt}
    \newcommand{\bftab}{\fontseries{b}\selectfont}%
    \renewcommand{\arraystretch}{1.4}%
    \begin{tabular}[b]{llccc}
        \toprule
        &  
        &  \multicolumn{3}{c}{Validation Set Metrics}
        \\\cmidrule{3-5}
        &  
        &  \multirow{2.2}{*}{\vspace{0pt}\shortstack{MSE \wrt $\gamma$-discounted\\[0.1ex]distances $\smash{(\times 10^{-3})}$ $\boldsymbol\downarrow$}}
        &  \multirow{2.2}{*}{\vspace{0pt}\shortstack{$\ell_1$ error when \\true $d < \infty$  $\boldsymbol\downarrow$}}
        &  \multirow{2.2}{*}{\vspace{0pt}\shortstack{Predicted distance \\ when true $d = \infty$  $\boldsymbol\uparrow$}}
        \\
        &
        &
        &
        &   \\
        \midrule
        \midrule
        
        IQE-sum
        & 
        & \bftab\round{1.0784752666950226}{3}{\color{gray}$~\pm$\round{0.05348694858770281}{3}}
        & \bftab\round{1.3025954961776733}{3}{\color{gray}$~\pm$\round{0.03140862114467444}{3}}
        & \round{118.24377746582032}{3}{\color{gray}$~\pm$\round{5.4118967785916}{3}}
        \\
        
        IQE-maxmean
        & 
        & \round{1.4875865541398525}{3}{\color{gray}$~\pm$\round{0.3065039835193903}{3}}
        & \round{1.333371901512146}{3}{\color{gray}$~\pm$\round{0.21830426684412976}{3}}
        & \round{89.63519439697265}{3}{\color{gray}$~\pm$\round{1.726092945040122}{3}}
        \\

        PQE-LH
        & 
        & \round{2.9209287371486425}{3}{\color{gray}$~\pm$\round{0.18719213143352414}{3}}
        & \round{1.6593403816223145}{3}{\color{gray}$~\pm$\round{0.047879188233684804}{3}}
        & \round{71.3904800415039}{3}{\color{gray}$~\pm$\round{0.4357155780987546}{3}}
        \\

        PQE-GG
        & 
        & \round{3.8718370720744133}{3}{\color{gray}$~\pm$\round{0.1359768019070454}{3}}
        & \round{2.121027374267578}{3}{\color{gray}$~\pm$\round{0.14593266365225357}{3}}
        & $\infty$  (overflow)
        \\

        Wide Norm
        & 
        & \round{3.532807668671012}{3}{\color{gray}$~\pm$\round{0.21201393857838322}{3}}
        & \round{1.769381594657898}{3}{\color{gray}$~\pm$\round{0.02132951306760224}{3}}
        & \round{124.65802764892578}{3}{\color{gray}$~\pm$\round{2.8677826229958154}{3}}
        \\[3pt]

        \multirow{2}{*}{Deep Norm}
        & (Original)
        & \round{5.071479082107544}{3}{\color{gray}$~\pm$\round{0.13476357031416314}{3}}
        & \round{2.0852701663970947}{3}{\color{gray}$~\pm$\round{0.0632648640792226}{3}}
        & \round{120.04515838623047}{3}{\color{gray}$~\pm$\round{4.352518794402874}{3}}
        \\[-1pt]

        & (+ Non-Negativity Fix)
        & \round{4.7599999234080315}{3}{\color{gray}$~\pm$\round{0.35420881108181046}{3}}
        & \round{2.0351162195205688}{3}{\color{gray}$~\pm$\round{0.056755020242355785}{3}}
        & \round{120.1505630493164}{3}{\color{gray}$~\pm$\round{4.69954206104865}{3}}
        \\[3pt]

        \multirow{2}{*}{MRN}
        & (Original)
        & \round{10.820086672902107}{3}{\color{gray}$~\pm$\round{0.8170756358954667}{3}}
        & \round{2.882313108444214}{3}{\color{gray}$~\pm$\round{0.20536220594789711}{3}}
        & \round{129.52826538085938}{3}{\color{gray}$~\pm$\round{4.237154491193482}{3}}
        \\[-1pt]

        & (+ \Qmet Fix)
        & \round{6.8754578940570354}{3}{\color{gray}$~\pm$\round{0.3326335049546617}{3}}
        & \round{2.508049249649048}{3}{\color{gray}$~\pm$\round{0.0907796124217027}{3}}
        & \round{129.914208984375}{3}{\color{gray}$~\pm$\round{6.2905016545869925}{3}}
        \\
        \midrule

        \ul{Best} Metric Embedding
        & 
        & \round{17.595218867063522}{3}{\color{gray}$~\pm$\round{0.26668822519713126}{3}}
        & \round{7.539879989624024}{3}{\color{gray}$~\pm$\round{0.07420489860935446}{3}}
        & \round{53.85003433227539}{3}{\color{gray}$~\pm$\round{3.843024787898114}{3}}
        \\

        \midrule

        \multirow{2}{*}{\vspace{0pt}\ul{Best} \Uncon Net.}
        & (No Regularizer)
        & \round{3.0862420331686735}{3}{\color{gray}$~\pm$\round{0.03916676919711341}{3}}
        & \round{2.1151447772979735}{3}{\color{gray}$~\pm$\round{0.024078517111510263}{3}}
        & \round{59.5242919921875}{3}{\color{gray}$~\pm$\round{0.37004197274349077}{3}}
        \\[-1pt]
        
        & (+ $\Delta$-Ineq.~Regularizer)
        & \round{2.8128044679760933}{3}{\color{gray}$~\pm$\round{0.06251623447057535}{3}}
        & \round{2.210947322845459}{3}{\color{gray}$~\pm$\round{0.034070232889604174}{3}}
        & \round{61.37094268798828}{3}{\color{gray}$~\pm$\round{0.3935568080866707}{3}}
        \\[3pt]

        \multirow{2}{*}{\ul{Best} Asym.~Dot Product}
        & (No Regularizer)
        & \round{48.105619102716446}{3}{\color{gray}$~\pm$\round{0.005621992204445349}{3}}
        & 2.520 $\times 10^{11}${\color{gray}~$\pm$2.175 $\times 10^{11}$}
        & 2.679 $\times 10^{11}${\color{gray}~$\pm$2.540 $\times 10^{11}$}
        \\[-1pt]
        
        & (+ $\Delta$-Ineq.~Regularizer)
        & \round{48.102133721113205}{3}{\color{gray}$~\pm$\round{0.0002099325861824455}{3}}
        & 2.299 $\times 10^{11}${\color{gray}~$\pm$9.197 $\times 10^{10}$}
        & 2.500 $\times 10^{11}${\color{gray}~$\pm$1.446 $\times 10^{11}$}
        \\
        \bottomrule
    \end{tabular}%
    }}
    \vspace{-2pt}
\end{table}

\paragraph{\Uncon Models and A Triangle Inequality Regularizer.} Unlike other methods that explicitly enforce \qmet structures, both \uncon networks and and asymmetrical dot products can represent any function, and are \uncon models.  We evaluate these methods because (1) that they are widely used  \citep{tian2020model,hong2021bi,rizi2018shortest,schaul2015universal} and (2) that their performances reveal whether standard training of generic models can somehow implicitly learn the underlying \qmet structure in data.  Additionally, we test whether explicit regularization can help these \uncon models better learn \qmet structure, and train them with a triangle inequality regularizer ${\mathbb{E}_{x,y,z}\big[\max(0, \gamma^{\hat{d}(x, y) + \hat{d}(y, z)} - \gamma^{\hat{d}(x, z)})^2\big]}$ for weights $\in \{0.3, 1, 3\}$.

All results are aggregated from $5$ seeds. Full details are provided in \Cref{sec:expr-details}.

\subsection{Large-Scale Social Graph}\label{sec:berkstan}

$\mathsf{Berkeley\hbox{-}Stanford~Web~Graph}$ \citep{snapnets} is a large real-wold social graph, containing $685{,}230$ pages as nodes, and $7{,}600{,}595$ hyperlinks as directed edges. Following prior work \citep{wang2022learning}, we use $128$-dimensional $\mathsf{node2vec}$ features \citep{grover2016node2vec} and the landmark method \citep{rizi2018shortest} to obtain a training set of $2{,}500{,}000$ pairs, and a validation set of $150{,}000$ pairs. 

\paragraph{IQEs significantly improve modeling large real-world graphs.} We train various \qmet models to approximate the training distances by minimizng MSE \wrt $\gamma$-discounted distance with $\gamma = 0.9$. In \Cref{tab:berkstan}, both IQEs greatly outperform all baselines, attaining lowest MSE, accurately predicting finite distances, and outputting high predictions for infinite (unreachable) pairs. Compared to the prior best methods, the simple IQE-sum has a $61\%$ improvement on MSE and a $16\%$ improvement on $\ell_1$ error (on finite distances).

\paragraph{Effects of $k$ and $l$.} Both IQE and PQE interpret input latent vectors as two-dimensional $\in \R^{k\times l}$, and computes $k$ components each from a $l$-dimensional subspace. With the fixed total latent dimension as $512$, we vary $k$ and $l$ choices and plot their effects in \Cref{fig:berkstan-k-l-ablation}. \begin{itemize}[topsep=3pt, itemsep=-2pt]
\item \textbf{IQEs} generally approximate and generalize better (lower test MSE) with larger component size $l$. More components (large $k$) often lead to larger predictions for unreachable pairs in IQE-sum, at the cost of worse approximation with smaller $l$. Small $k$ and large $l$ usually perform well, but the best results are from using a large component size $l$ while still maintaining some number of components (\eg, $k \geq 8$ for IQE-sum).
\item \textbf{PQE} behavior is largely unaffected by this choice, and generally underperforms IQE except for a few extreme choices. With large $k$, PQE-GG tends to overflow when predicting on unreachable pairs, while PQE-LH does not suffer from this issue.
\end{itemize}

\paragraph{Strict \qmet structure is better than regularizing \uncon models.} As shown in \Cref{tab:berkstan}, adding a triangle inequality regularizer only has marginal benefits, and is still significantly worse than the strictly enforced \qmet structure from IQEs.

\begin{figure}[t]
\centering
\floatconts
  {fig:berkstan-k-l-ablation}
  {\caption{Effect of different $(k,l)$ choices for IQEs and PQEs with fixed total latent dimension $=512$.}}
  {%
    \includegraphics[scale=0.5175, trim=10 15 0 11]{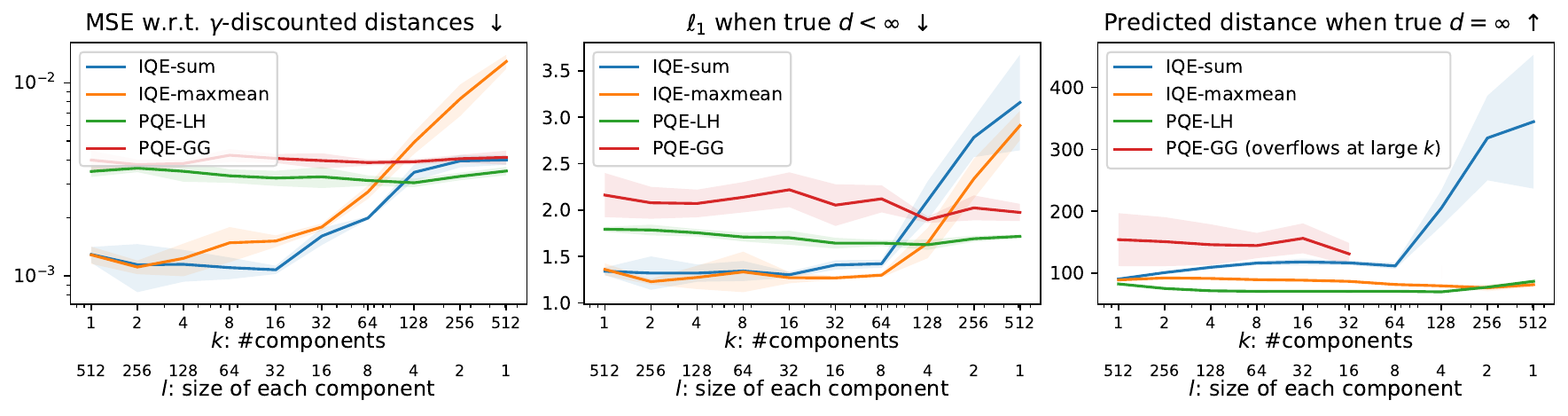}%
    \vspace{-16pt}%
  }
\end{figure}

\subsection{Random Graphs}\label{sec:rand-graph}

Using three randomly generated graphs, we compare different \qmet models' ability to fit different \qmet structures with different training set sizes (by regressing graph distances in a similar fashion as above), and to generalize on test pairs. We visualize the distinct structures of three graphs \Cref{fig:graph-data-pdist} in the appendix.

\paragraph{IQEs are simple, efficient and effective. } In \Cref{fig:random-graph}, IQEs are consistently among the top performing methods for all three distinct structures, when many other latent \qmet baselines use $12{,}500$ more parameters. While Deep Norm slightly outperforms IQEs on the sparse graph with a large training set size, IQEs are consistently better with smaller training sets on all graphs, and comparable in other settings.

\begin{figure}[t]
\centering
\floatconts
  {fig:random-graph}
  {\caption{Modeling graphs of different structures. Deep Norm, Wide Norm and MRN use latent \qmet head with $12{,}500$ more parameters than IQEs ($\leq 1$ parameter) and PQEs. The much simpler IQEs are comparable or better than them, and outperform all other methods.%
  \vspace{4pt}%
  }}
  {%
    \includegraphics[scale=0.459, trim=6 8 0 12]{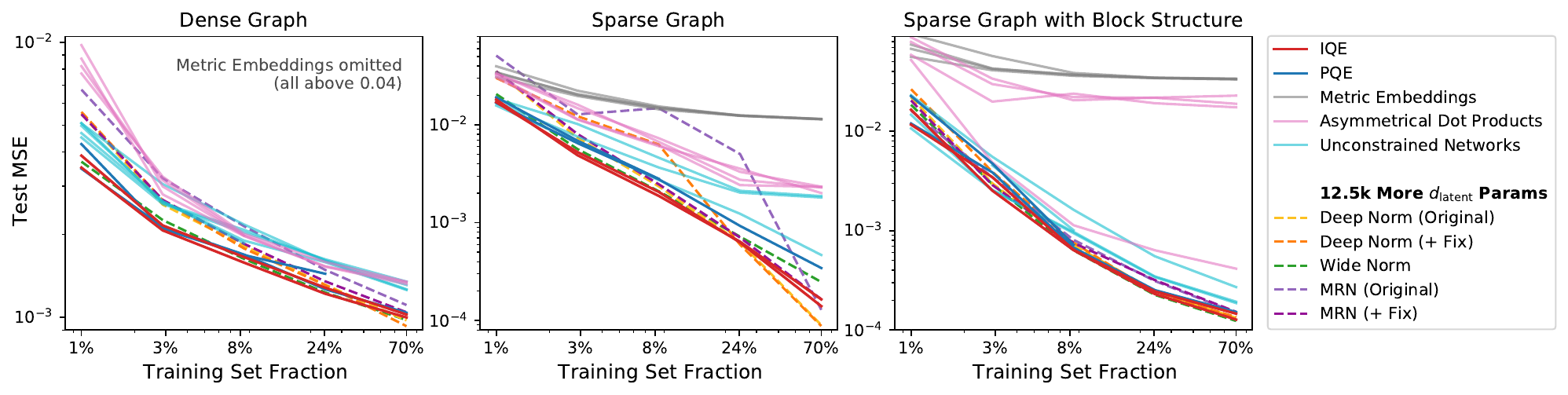}%
    \vspace{-19pt}%
  }
\end{figure}

\begin{figure}[t]
\centering
\floatconts
  {fig:q-learning}%
  {\caption{Offline goal-conditioned Q-learning results on a simple grid-world with four directional actions. Using different goal-conditioned Q-function models leads to different inductive biases and planning success rates. We use one-step greedy planning \wrt learned Q-function.}}%
  {%
    \subfigure[\small Grid-world with\newline one-way doors.]{\label{fig:gridworld}%
      \includegraphics[scale=0.254, trim=3 -36 0 0]{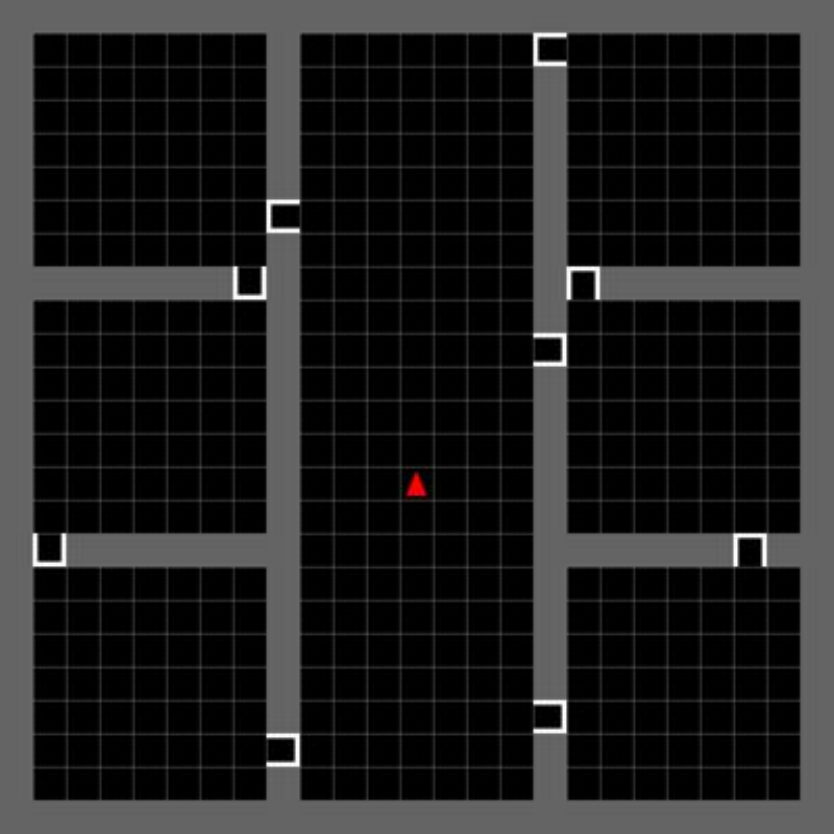}%
    }%
    \hfill%
    \subfigure[\small Planning with different Q-function models.]{\label{fig:q-learning-psucc}%
      \includegraphics[scale=0.313, trim=-5 18 1 8]{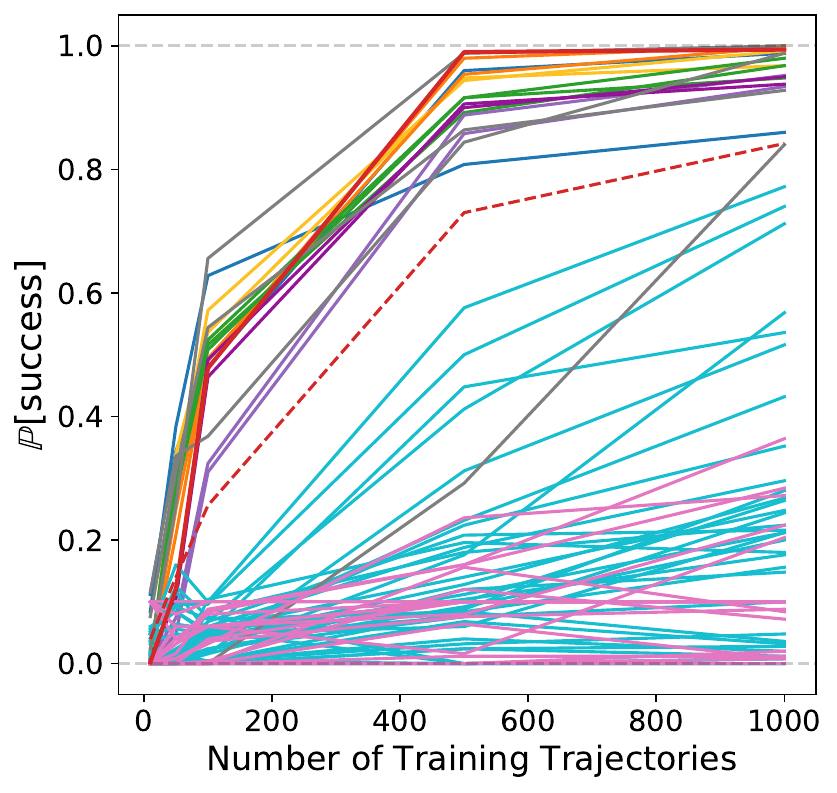}%
    }%
    \hfill\hspace{3pt}%
    \subfigure[\small Comparing top-performing methods with sufficiently many training trajectories.]{\label{fig:q-learning-psucc-zoom}%
      \includegraphics[scale=0.313, trim=2 18 0 8]{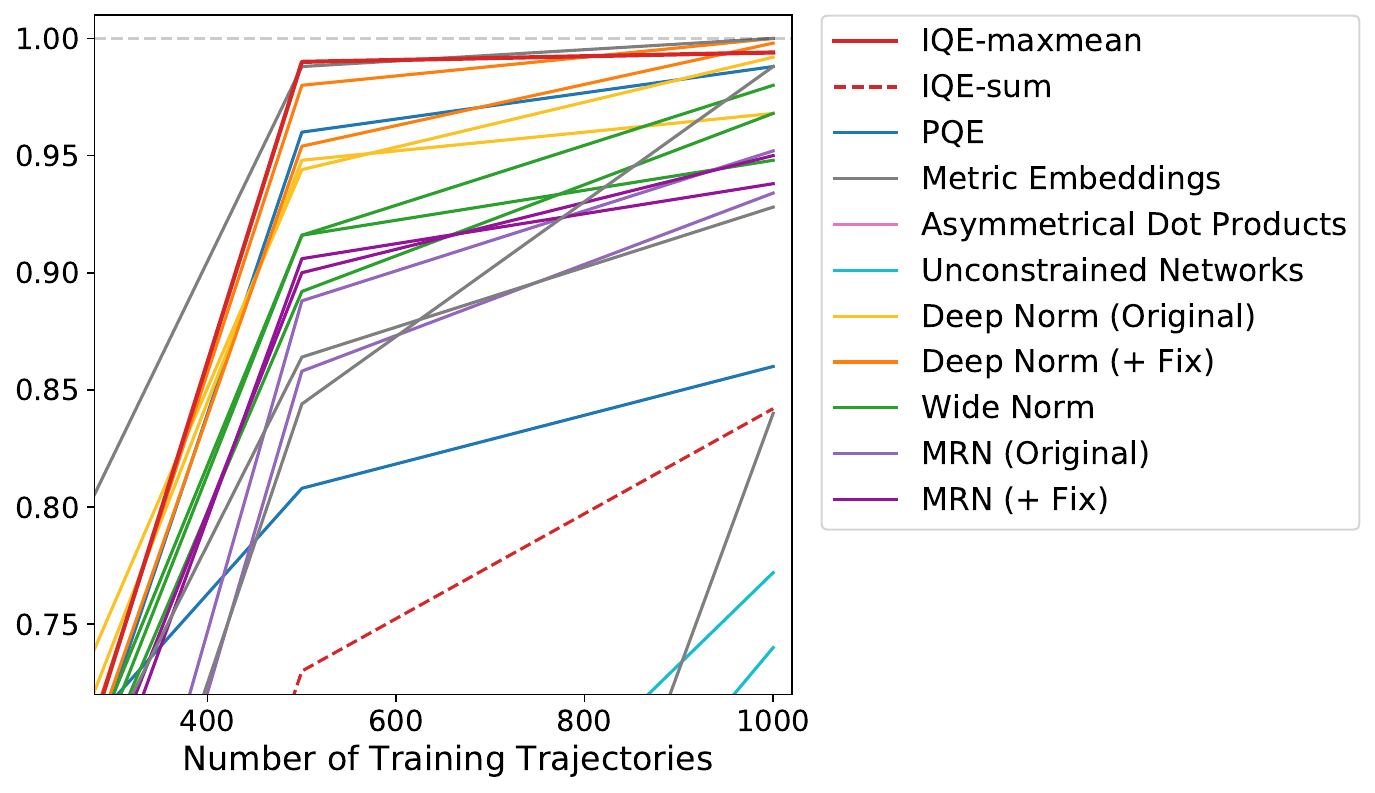}\hspace{-6.5pt}%
    }%
    \vspace{-18pt}%
  }
\end{figure}

\subsection{Offline Q-Learning}
In planning, optimal plan costs to reach target states have a \qmet structure \citep{bertsekas1991analysis,tian2020model,wang2022learning}. We use \qmet models as drop-in replacements for goal-conditioned Q-function models in offline Q-learning on grid-world environment with one-way doors (\Cref{fig:gridworld}; \citealp{wang2022learning}). 

\paragraph{\Qmet structure improves sample efficiency of offline RL.} In \Cref{fig:q-learning-psucc}, \qmet models generally outperform the widely used unconstrained networks and asymmetrical dot products, which do not have similar geometric constraints. Deep Norm and MRN also benefit from our proposed fixes that strictly enforce \qmet constraints. Indeed, \qmet structures greatly improve sample efficiency, and leads to better planning results with fewer training trajectories. As training set size increases, IQE-maxmean outperforms all other latent \qmet baselines, and only Deep Norm (with our  fix) remains comparable but requires many more parameters  (\Cref{fig:q-learning-psucc-zoom}).

\paragraph{IQE-maxmean is effective in offline RL. } IQE-maxmean performs significantly better than  IQE-sum on this task. We suspect that the $\max$ operation in the $\mathrm{maxmean}$ reduction may encourage the learned function to be more conservative (\ie, outputting larger distances), which improves offline RL \citep{kumar2020conservative}. Indeed, other methods that also use the $\mathrm{maxmean}$ reduction (Deep Norm and Wide Norm) generally perform better than MRN and PQEs, which use a summation reduction. IQE-maxmean outperforms most other such methods that use $\mathrm{maxmean}$ reduction, suggesting that its effectiveness is not just due to the reduction, but also the IQE formulation.

\section{Discussion}

In this work, we present four desired criteria when adding \qmet structures to machine learning models (\Cref{sec:qmet-structure}). Our proposed Interval \Qmet Embedding (IQE) is the first method that satisfies all four (\Cref{sec:iqe}), and has strong theoretical guarantees (\Cref{sec:approx}) and empirical performance (\Cref{sec:expr}). We believe that IQE's simple yet powerful form can enable more machine learning applications of \qmets in modeling asymmetrical geometric structures, and that our four criteria are helpful in developing novel and better \qmet structures.

\section*{Acknowledgements}
TW was supported in part by ONR MURI grant N00014-22-1-2740.

\bibliography{references}

\appendix

\section{Deriving IQE From PQE}\label{apd:iqe-pqe}

Here we will derive IQE via modifying the PQE-LH formula to scale linearly with latent (\ie, to have latent positive homogeneity).

Recall the PQE-LH formula from \citet{wang2022learning}: \begin{equation}
\eqname{PQE-LH}
    d_\mathsf{PQE\hbox{-}LH}(u, v; \alpha) = \sum_i \alpha_i \cdot (1 - \exp(-\sum_{j} (u_{ij} - v_{ij})^+)). \label{eq:pqe-lh}
\end{equation}

To make it scale linearly with latents, we must avoid the exponentiation transform on latent vector values, and instead use the latent vector to control a linear quantity. Therefore, we will reformulate the outer sum as an integral, and use latent vector to indicate where the summand (now integrand) has non-zero values. 

First, we reformulate \Cref{eq:pqe-lh} with an integration without weighting (by $\alpha$):\begin{equation}
\eqname{Integral PQE-LH}
    d_\mathsf{Integral\hbox{-}PQE\hbox{-}LH}(u, v) = \int_x (1 - \exp(-\sum_{j} (h_{j}(u; x) - h_{j}(v; x))^+)) \diff x.
\end{equation}

PQE-LH is derived by considering processes only activated on sets of the form $[x, \infty)$ (half-lines). Inspired by this choice, we consider $h_j(u; x) = \begin{cases} c & \text{ if } x > u_j \\ 0 & \text{ otherwise} \end{cases}$, for some $c > 0$.

Then \begin{equation}
    d_\mathsf{Integral\hbox{-}PQE\hbox{-}LH}(u, v; c) = \int_x (1 - \exp(- c \cdot \size{ \{j \colon x \in [u_j, \max(u_j, v_j)]\} }) \diff x.
\end{equation}

Take $c \rightarrow \infty$, we have \begin{align}
    d_\mathsf{Integral\hbox{-}PQE\hbox{-}LH}(u, v) 
    & = \int_x  \indic{\exists j, x \in  [u_j, \max(u_j, v_j)]} \diff x \\
    & = \size{\bigcup_j\ [u_j, \max(u_j, v_j)]},
\end{align}
which is exactly the IQE component.

Then, for expressivity, we combine several such components and obtain IQEs.

\section{Proofs}\label{apd:proofs}

\begin{proof}[\Cref{thm:iqe-maxmean-fin}]

\begin{itemize}
    \item \textbf{Proof for \text{IQE-maxmean}.}
    
    At $l=1$, \text{IQE-maxmean} formula can exactly recover the MRN asymmetrical component $d_\mathsf{asym}$. By Theorem~2 of \citet{liu2022metric}, $(f_1, d_\mathsf{asym})$ can exactly represent $d$ for some $f_1$. Therefore, the same results apply to $d_\mathsf{IQE\hbox{-}maxmean}$.

    \item \textbf{Proof for \text{IQE-sum}.}

    For $d_\mathsf{IQE\hbox{-}sum}$, we present a novel construction that allows it to represent any \qpart, and thus any convex combination of \qparts. Then, by Lemma~C.5 of \citet{wang2022learning}, some convex combination of \qparts admits a $\mathcal{O}(t \log^2 n)$ embedding.
    
    WLOG, consider any \qpart $\pi$ represented as an order embedding $g \colon \mathcal{X} \rightarrow [n]^m$. That is, \begin{equation}
        \pi(u, v) = \begin{cases}
        0 & \text{ if } g(u) \leq g(v) \text{ coordinate-wise} \\
        1 & \text{ otherwise.}
        \end{cases}
    \end{equation}
    Consider vectors $e_i \in \{0, 1\}^n$, where only the first $i$ dimensions are $0$'s, and the rest are $1$'s. These vector nicely connect the IQE component structure (union of intervals) with the order embedding structure (conjunction over coordinate-wise comparisons).
    
    For any latent $u, v$ and any $i \in [m]$, \begin{equation}
        \bigcup_{j=1}^n \big[ (e_{g_i(u)})_j, \max((e_{g_i(u)})_j, (e_{g_i(v)})_j) \big] = \begin{cases}
        \varnothing & \text{ if } g_i(u) \leq g_i(v) \\
        [0, 1] & \text{ otherwise.}
        \end{cases}
    \end{equation}

    Construct mapping \begin{equation}
        f(u) \trieq [e_{g_1(u)} :: e_{g_2(u)} :: \dots :: e_{g_m(u)} ] \in \{0, 1\}^{mn}, 
    \end{equation}
    where $::$ denotes concatenation.
    
    Then, for any latent $u, v$, \begin{equation}
        \bigcup_{j=1}^n \big[ f_i(u), \max(f_i(u), f_i(v)) \big] = \begin{cases}
        \varnothing & \text{ if } g(u) \leq g(v) \text{ coordinate-wise} \\
        [0, 1] & \text{ otherwise.}
        \end{cases}
    \end{equation}
    
    By using scaled $f$, each IQE component can thus represent arbitrary scaled \qpart. Thus \text{IQE-sum} can exactly represent any convex of \qparts using a polynomial-sized neural encoder.
\end{itemize}
\end{proof}

\begin{proof}[\Cref{thm:iqe-maxmean-inf}]
In proof of \Cref{thm:iqe-maxmean-fin}, a reduction from MRN asymmetrical part to \text{IQE-maxmean} is given. The same reduction can be applied here. Invoking Theorem~2 of \citet{liu2022metric} leads to the desired result.
\end{proof}

\begin{proof}[\Cref{thm:dnwn-ua}]
MRN approximation results (same as \Cref{thm:iqe-maxmean-fin,thm:iqe-maxmean-inf}) are proved showing that an asymmetric norm (\ie, semi-norm) universally approximate \qmets (Theorem~2 of \citealp{liu2022metric}). Deep Norm and Wide Norm can approximate any semi-norm (Theorem~2 of \citealp{pitis2020inductive}) and thus have the same properties.
\end{proof}

\section{Fixes for Deep Norm and MRN}

The original formulations of Deep Norm and MRN actually do not fully satisfy \qmet constraints. Here we highlight where they are wrong and explain our proposed fixes. In \Cref{sec:expr}, we compare with both the original and the fixed version.

\subsection{Deep Norm May Be Negative}\label{sec:dn-issue}

In the original work \citep{pitis2020inductive}, Deep Norm is formulated as a combinations over several components, each of which is the output of a $\mathrm{maxrelu}$ activation, where \begin{equation}
    \mathrm{maxrelu}(x, y; \alpha, \beta) \trieq [\max(x, y), \alpha \cdot \mathrm{ReLU}(x) + \beta \cdot \mathrm{ReLU}(y)].
\end{equation} 
However, the $\max$ component is not guaranteed to be non-negative, so the eventual output may be negative, and Deep Norm may not be a valid \qmet. 

To fix this issue, we simply replace the final activation to be simply $\mathrm{ReLU}$. As shown in \Cref{tab:berkstan,fig:q-learning-psucc-zoom}, this fix improves performance.

\subsection{MRN May Not Be A \Qmet}\label{sec:mrn-issue}

In the original work \citep{liu2022metric}, MRN is formulated as the sum of a symmetrical component and an asymmetrical component. While the asymmetrical component $\max_i (h(u)_i - h(v)_i)^+$ is a valid \qmet, the symmetrical component $\norm{\phi(u) - \phi(v)}_2^2$ is not a metric.

To fix this issue, we simply remove the square and use $\norm{\phi(u) - \phi(v)}_2$ as the symmetrical component. As shown in \Cref{tab:berkstan,fig:q-learning-psucc-zoom}, this fix improves performance.

\section{Experiment Details}\label{sec:expr-details}

Across all three tasks, our architecture choices and optimization settings generally follow the prior work \citep{wang2022learning}. For completeness, we report the full details below. We run each experiment setting for $5$ runs with different seeds, and present the aggregated results.

\subsection{Large-Scale Social Graph}

\paragraph{Architecture.} For all embedding methods (\ie, asymmetrical dot products and latent \qmets), we use a 128-2048-2048-2048-512 ReLU encoder with Batch Normalization \citep{ioffe2015batch} after each activation. The encoders take in $128$-dimensional inputs and output $512$-dimensional latent vectors. \Uncon networks use a similar 256-2048-2048-2048-512-1 ReLU network, mapping concatenated the $256$-dimensional input to a scalar output.

\paragraph{Optimization.} We use $80$ training epochs, batch size $1024$, and the Adam optimizer \citep{kingma2014adam}, with learning rate decaying from $10^{-4}$ to $0$ by the cosine schedule without restarting \citep{loshchilov2016sgdr}. The training objective is MSE on the $\gamma$-discounted distances, with $\gamma=0.9$. When applying the triangle inequality regularizer (for asymmetrical dot products and \uncon networks), $342 \approx 1024/3$ triplets are uniformly sampled at each iteration to compute the regularizer term.

\paragraph{Hyperparameters.} For the following baselines, we tune their hyperparameters:
\begin{itemize}[topsep=6pt, itemsep=-1pt]
    \item \textbf{IQE} and \textbf{PQE}: component size $l \in \{8, 16, 32, 64\}$ (and thus correspondingly number of components $k \in \{64, 32, 16, 8\}$).
    \item \textbf{Deep Norm (both the original version and the version with our fix)}: three layers with hidden size $\in \{128, 512\}$, where final number of output components equals the hidden size.
    \item \textbf{Wide Norm}: $32$ components each with size $\in \{32, 48, 128\}$.
    \item \textbf{MRN (both the original version and the version  with our fix)}: Both the symmetrical and the asymmetrical projection heads have two layers with hidden size $\in \{128, 512\}$, where the projector output dimension equals the hidden size.
    \item \textbf{Asymmetrical Dot Products and \Uncon Networks with $\Delta$-inequality regularizer}: regularizer weight $\in \{0.3, 1, 3\}$.
\end{itemize}

\subsection{Random Graphs}
\begin{figure}[t]
\centering
\floatconts
  {fig:graph-data-pdist}%
  {\caption{Structures of random graphs used in \Cref{sec:rand-graph} experiments.}}%
  {%
    \subfigure[\small Dense Graph.]{%
      \includegraphics[scale=0.63, trim=0 0 0 0]{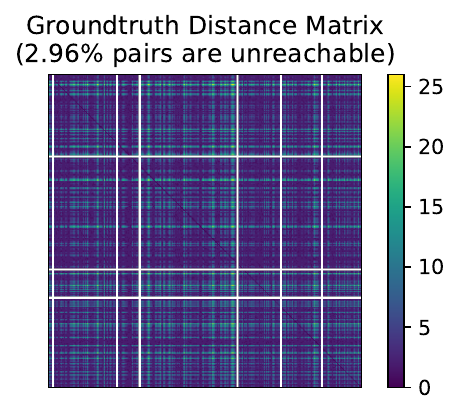}%
    }%
    \hfill%
    \subfigure[\small Sparse Graph.]{%
      \includegraphics[scale=0.63, trim=0 0 0 0]{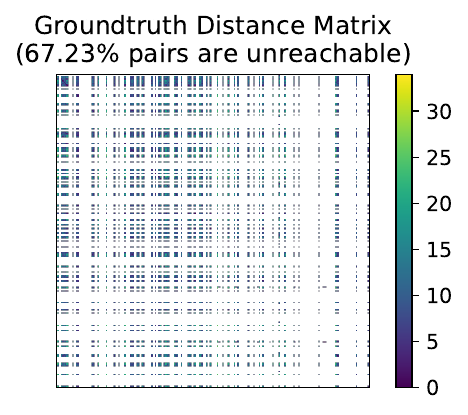}%
    }%
    \hfill%
    \subfigure[\small Sparse Graph with Block Structure.]{%
      \includegraphics[scale=0.63, trim=0 0 0 0]{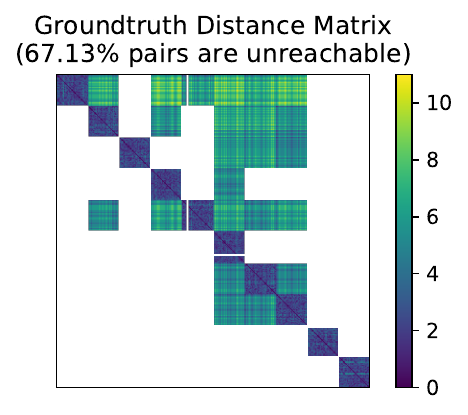}%
    }%
    \vspace{-20pt}%
  }
\end{figure}

\paragraph{Graph Structures and Datasets.} We use the same randomly generated $300$-node graphs and train-validation splits as \citet{wang2022learning}.  See \Cref{fig:graph-data-pdist} for a visualization on their structures. We test the models on $5$ different training set ratios, where training set fraction (of the total $90{,}000$ pairs) are evenly spaced on the logarithm scale from $0.01$ to $0.7$. We use the same $64$-dimensional node features from \citet{wang2022learning}.

\paragraph{Architecture.} For all embedding methods (\ie, asymmetrical dot products and latent \qmets), we use a 4-128-128-128-48 ReLU encoder, mapping to $48$-dimensional latent vectors. \Uncon networks use a similar 128-128-128-128-48-1ReLU network, mapping concatenated the $128$-dimensional input to a scalar output.

\paragraph{Optimization.} We use  $3000$ training epochs, batch size $2048$, and the Adam optimizer \citep{kingma2014adam}, with learning rate decaying  to $0$ by the cosine schedule without restarting \citep{loshchilov2016sgdr}. All models are optimized \wrt MSE on the $\gamma$-discounted distances, with $\gamma=0.9$. When running with the triangle inequality regularizer, $342 \approx 1024/3$ triplets are uniformly sampled at each iteration.

\paragraph{Hyperparameters.} For all methods, learning rates are tuned among $\{10^{-4}, 3 \times 10^{-4}, 10^{-3}, 3\times 10^{-3}, 10^{-2}\}$. We run with the following hyperparameters for each method:
\begin{itemize}[topsep=6pt, itemsep=-1pt]
    \item \textbf{IQE} and \textbf{PQE}: component size $l \in \{4, 6, 8, 12\}$ (and thus correspondingly number of components $k \in \{12, 8, 6, 4\}$).
    \item \textbf{Deep Norm (both the original version and the version with our fix)}: three layers with $48$ hidden size, where final number of output components equals the hidden size.
    \item \textbf{Wide Norm}: $12$ components each with size $11$, $22$ components each with size $6$, or $6$ components each with size $22$.
    \item \textbf{MRN (both the original version and the version  with our fix)}: Both the symmetrical and the asymmetrical projection heads have two layers with $58$ hidden size, where the projector output dimension equals the hidden size.
    \item \textbf{Asymmetrical Dot Products and \Uncon Networks with $\Delta$-inequality regularizer}: regularizer weight $\in \{0.3, 1, 3\}$.
\end{itemize}

\paragraph{\Cref{fig:random-graph} Details.}
For clarity, we didn't plot all settings for each family of method. Instead, we plot only the best member within the following families: IQE-sum, IQE-maxmean, PQE-LH, PQE-GG, Wide Norm, Deep Norm (original), Deep Norm (with fix), MRN (original), MRN (with fix), \uncon networks with each particular output parametrization (\ie, we tune triangle inequality regularizer weight for \uncon networks that output discounted distances, and only plot the best), asymmetrical dot products with each particular output parametrization (done in an identical way with \uncon networks). All metric embeddings are plotted.

\subsection{Offline Q-Learning}

\paragraph{Environment and Datasets.} Following \citet{wang2022learning}, we use the grid-world environment based on \texttt{gym-minigrid} \citep{gym_minigrid}, and use a training dataset of  trajectories, collected by an $\eps$-greedy planner with groundtruth \qmets, with a large $\eps = 0.6$, where each trajectory is capped at $200$ steps.

\paragraph{Algorithm.} Following \citet{wang2022learning}, we use a modified version of MBOLD \citep{tian2020model}. Please refer to \citet{wang2022learning} for details on the modifications.  To use encoder-based methods, we train them with goals as state-action pairs. In evaluation, for current state $s$, candidate action $a$ and a given goal state $g$, we use $\frac{1}{\size{\mathcal{A}}} d((s, a), (g,a')) - 1$ as the predicted distance from $(s,a)$ to $g$. For \uncon networks, we also test the original formulation where goals are simply states.

\paragraph{Architecture.} For all embedding methods (\ie, asymmetrical dot products and latent \qmets), we use a 18-2048-2048-2048-1024 ReLU encoder with Batch Normalization \citep{ioffe2015batch} after each activation. The encoders take in $18$-dimensional states and output four $256$-dimensional latent vectors, one for each actions. For \uncon networks, \begin{itemize}[topsep=6pt, itemsep=-1pt]
    \item With the new formulation using state-action pairs as goals, we use a similar 36-2048-2048-2048-256-16 network to map input state pairs to a value for each $\size{\mathcal{A}}\times\size{\mathcal{A}}$ action pair options;
    \item With the original formulation using states as goals, we use a similar 36-2048-2048-2048-256-4 network to map input state pairs to a value for each action.
\end{itemize}

\paragraph{Optimization.} We use $100$ training epochs, batch size $1024$, and the Adam optimizer \citep{kingma2014adam}, with learning rate decaying from $10^{-4}$ to $0$ by the cosine schedule without restarting \citep{loshchilov2016sgdr}. The training objective is the same as usual Q-learning: MSE on the $\gamma$-discounted distances, with $\gamma=0.95$. When applying the triangle inequality regularizer (for asymmetrical dot products and \uncon networks), $341 \approx 1024/3$ triplets are uniformly sampled at each iteration to compute the regularizer term.

\paragraph{Planning.} We  perform simple greedy 1-step planning without any lookahead. At each step, we query the learned Q-function for all action choices, and select the best action. In evaluation, we plan for $50$ goals, and cap trajectory length at $300$ steps.

\paragraph{Hyperparameters.} For each method, we run the following hyperparameter choices:
\begin{itemize}[topsep=6pt, itemsep=-1pt]
    \item \textbf{IQE}: component size $l = 8$ (and thus correspondingly number of components $k= 16$), selected based on effects of $(k,l)$ discussed in \Cref{sec:berkstan}.
    \item \textbf{PQE}: component size $l  = 4$ (and thus correspondingly number of components $k = 32$), following \citet{wang2022learning}.
    \item \textbf{Deep Norm (both the original version and the version with our fix)}: three layers with hidden size $\in \{64, 128\}$, where final number of output components equals the hidden size.
    \item \textbf{Wide Norm}: $32$ components each with size $\in \{32, 48, 128\}$.
    \item \textbf{MRN (both the original version and the version  with our fix)}: Both the symmetrical and the asymmetrical projection heads have two layers with hidden size $\in \{128, 512\}$, where the projector output dimension equals the hidden size.
    \item \textbf{Asymmetrical Dot Products and \Uncon Networks with $\Delta$-inequality regularizer}: regularizer weight $\in \{0.3, 1, 3\}$.
\end{itemize}
Each choice is plotted as a line in \Cref{fig:q-learning}. 

\end{document}